\documentclass{article}

\usepackage[a4paper, total={6in, 8in}]{geometry}

\usepackage{hyperref}       
\usepackage{url}            
\usepackage{booktabs}       
\usepackage{amsfonts}       
\usepackage{nicefrac}       
\usepackage{microtype}      
\usepackage{xcolor}         

\usepackage{times}
\usepackage{soul}
\usepackage{url}
\usepackage{graphicx}
\usepackage{amsmath}
\usepackage{amsthm}
\usepackage{booktabs}
\usepackage{algorithm}
\usepackage{algorithmic}
\urlstyle{same}

\newtheorem{definition}{Definition}


\newtheorem{theorem}{Theorem}
\newtheorem{proposition}{Proposition}

\newcommand{\pa}{\operatorname{PA}}
\newcommand{\epa}{\operatorname{EPA}}
\newcommand{\pc}{\operatorname{PC}}
\newcommand{\mb}{\operatorname{MB}}

\newcommand{\ce}{\operatorname{CE}}
\newcommand{\ece}{\operatorname{ECE}}
\newcommand{\eece}{\operatorname{EECE}}

\newcommand{\cate}{\operatorname{CATE}}
\newcommand{\ate}{\operatorname{ATE}}

\renewcommand{\emph}{\textit}

\makeatletter
\newcommand*{\indep}{%
	\mathbin{%
		\mathpalette{\@indep}{}%
	}%
}

\newcommand*{\nindep}{%
	\mathbin{
		\mathpalette{\@indep}{/}%
	}%
}

\newcommand*{\@indep}[2]{%
	\sbox0{$#1\perp\m@th$}
	\sbox2{$#1=$}
	\sbox4{$#1\vcenter{}$}
	\rlap{\copy0}
	\dimen@=\dimexpr\ht2-\ht4-.2pt\relax
	\kern\dimen@
	\ifx\\#2\\%
	\else
	\hbox to \wd2{\hss$#1#2\m@th$\hss}%
	\kern-\wd2 %
	\fi
	\kern\dimen@
	\copy0 
}
\makeatother

\title{Explanatory causal effects for model agnostic explanations}

\author{ 
Jiuyong Li$^1$ \and Ha Xuan Tran$^1$ \and Thuc Duy Le$^1$ \and Lin Liu$^1$ \and Kui Yu$^2$ \and Jixue Liu$^1$ \\
$^1$ University of South Australia \\
$^2$ School of Computer and Information, Hefei University of Technology \\
}

\begin{document}

\maketitle

\begin{abstract}
This paper studies the problem of estimating the contributions of features to the prediction of a specific instance by a machine learning model and the overall contribution of a feature to the model. The causal effect of a feature (variable) on the predicted outcome reflects the contribution of the feature to a prediction very well. 
A challenge is that most existing causal effects cannot be estimated from data without a known causal graph. In this paper, we define an explanatory causal effect based on a hypothetical ideal experiment. The definition brings several benefits to model agnostic explanations. First, explanations are transparent and have causal meanings. Second, the explanatory causal effect estimation can be data driven. Third, the causal effects provide both a local explanation for a specific prediction and a global explanation showing the overall importance of a feature in a predictive model. We further propose a method using individual and combined variables based on explanatory causal effects for explanations.  
%
We show the definition and the method work with experiments on some real-world data sets.   
\end{abstract}


\section{Introduction}

Now more and more black box machine learning models are used for various decision making applications, and users are interested in knowing how a decision is made. 

This paper studies the problem of estimating the contributions of features in a specific prediction by a machine learning model and the overall contributions of features to a predictive model. The first part belongs to prediction level (local) post hoc interpretation~\cite{Shapley-ValuePredictionContribution,Baehrens-Explain-Individual-Decisions2010,ribeiro2016should,lundberg2017unified}, and the second part is closely related to model level (global) post hoc interpretation~\cite{AltmannFeatureImportance2010,Olden2004-Variableimportance}. Quite a body of work has been done to solve the problem~\cite{MurdochPNASReview2021,MILLER20191,ExplainingExplanations,Review-InterpretableML2020}. This paper focuses on causality based solutions. 

Interpreting the contributions of features to predictions causally is a natural way of explaining the decisions made by predictive models. An intuitive measure of the contribution of a feature to a prediction is the weight added by the presence of the feature to the prediction, and this weight should represent the pure contribution of the feature (in causal terms, the weight is not biased by confounders). The causal effect of a feature on the predicted outcome indicates the change of the predicted outcome due to a change of (i.e. an intervention on) the feature, and best captures the contribution of a feature to a prediction. 

In the past few years, many causally motivated methods for post hoc interpretation have been developed. such as~\cite{SchwabCXPlain2019,kim2018interpretability,goyal2019explaining,NeuralNetworkAttributions2019,CounterfactualModularDeepGenerativeModels2020,generativeCausalExplanation2020,CausalAttribution2019,NeuralNetworkAttributions2019,harradon2018causal,CausalInterpretation2020}. They share some of the following limitations.  

First, the process of estimating causal effects may not be transparent, and this hinders the understanding of explanations. 
In many solutions, the objective function is causally motivated, but then the causal effect is estimated by a complex optimisation method (in many cases, a black box model). The lack of transparency makes causal interpretation difficult.


Second, the assumptions for the causal interpretation
are unclear in many works, and this leads to ambiguous causal semantics.  
Many works do not discuss assumptions for a causal interpretation. Without assumptions, there might not be a causal meaning. For example
Asymmetric Shapley Values (ASVs)~\cite{AsymmetricShapleyValues2020}  relax the symmetric requirement in Shapley value calculation and incorporate causal directions for asymmetric calculation. However, it is unclear whether ASVs represent causal effects. 



Third, some methods require strong domain knowledge. The method in~\cite{karimi2021algorithmicREcourse} needs a known causal graph and the method in~\cite{CLEAR2020} requires a list of user chosen variables as an input. The knowledge may not be available in many applications. 

Fourth, the majority of methods are specifically for image classification. There needs a general purpose causal explanation measure for other black box models, such as random forest~\cite{Breiman2001_RandomForest} and SVM~\cite{SupporVectorNetworks1995}.      



We aim at defining an explanatory causal effect (ECE) for the general purpose explanation of black box models with the full assumptions as in graphic causal model~\cite{Pearl2009_Book}. The ECE is simple and transparent to make explanations explicit. 

The defined explanatory causal effect needs to be estimated from data with the minimum domain knowledge, but purely data driven causal effect estimation poses a major challenge. A causal graph which encodes causal relationships is necessary to estimate causal effects, but a causal graph is unavailable in most applications, and thus needs to be learned from data. However, it is impossible to learn from a data set the unique causal graph and only an equivalence class of causal graphs, leading to multiple estimated values of a causal effect~\cite{Maathuis-IDA-Statist}. 




We make the following contributions in this paper. 

\begin{enumerate}
\item We define explanatory causal effects (ECEs) based on a hypothetical experiment, and show that the estimation of ECEs does not need a complete causal graph but a local causal graph, which can be discovered in data uniquely in the causal explanation setting. Therefore, ECEs can be estimated from data. 
\item We present a method to explain a prediction and a model with individual and combined variables based on ECEs to enrich explanations. 
\item We use relational, text and image data sets to demonstrate the proposed ECEs and the method work.  
\end{enumerate}


\section{Background}
\label{sec:background}
In this section, we present the necessary background of causal inference. We use upper case letters to represent variables and bold-faced upper case letters to denote sets of variables. The values of variables are represented using lower case letters.

Let $\mathcal{G}=(\mathbf{V},\mathbf{E})$ be a graph, where $\mathbf{V}=\{V_{1},\dots, V_{p}\}$ is the set of nodes and $\mathbf{E}$ is the set of edges between the nodes, i.e. $\mathbf{E}\subseteq \mathbf{V}\times \mathbf{V}$. A path $\pi$ is a sequence of distinct nodes such that every pair of successive nodes are adjacent in $\mathcal{G}$. A path $\pi$  is directed if all arrows of edges along the path point towards the same direction.  A path between $(V_i,  V_j)$ is a backdoor path with respect to $V_i$ if it has an arrow into $V_i$. Given a path $\pi$, $V_{k}$ is a collider node on $\pi$ if there are two edges incident such that  $V_{i}\rightarrow V_{k} \leftarrow V_{j}$. In $\mathcal{G}$, if there exists $V_i\rightarrow V_j$, $V_i$ is a parent of $V_j$ and we use $\pa(V_j)$ to denote the set of all parents of $V_j$. In a directed path $\pi$, $V_i$ is an ancestor of $V_j$ and $V_j$ is a descendant of $V_i$ if all arrows point towards $V_j$. 

A DAG (Directed Acyclic Graph) is a directed graph without directed cycles. With the following two assumptions, a DAG links to a distribution. 

\begin{definition} [Markov condition~\cite{Pearl2009_Book}]
	\label{Markovcondition}
	Given a DAG $\mathcal{G}=(\mathbf{V}, \mathbf{E})$ and $P(\mathbf{V})$, the joint probability distribution of $\mathbf{V}$, $\mathcal{G}$ satisfies the Markov condition if for $\forall V_i \in \mathbf{V}$, $V_i$ is probabilistically independent of all non-descendants of $V_i$, given the parents of $V_i$.
\end{definition}


\begin{definition}[Faithfulness~\cite{spirtes2000causation}]
	\label{Faithfulness}
	A DAG $\mathcal{G}=(\mathbf{V}, \mathbf{E})$ is faithful to $P(\mathbf{V})$ iff every independence presenting in $P(\mathbf{V})$  is entailed by $\mathcal{G}$, which fulfils the Markov condition. A distribution $P(\mathbf{V})$ is faithful to a DAG $\mathcal{G}$ iff there exists DAG $\mathcal{G}$ which is faithful to $P(\mathbf{V})$.
\end{definition}

With the Markov condition and faithfulness assumption~\cite{spirtes2000causation}, we can read the (in)dependencies between variables in $P(V)$ from a DAG using $d$-separation~\cite{Pearl2009_Book}. 

\begin{definition} [$d$-Separation]
	A path in a DAG is $d$-separated by a set of nodes $\mathbf{S}$ if and only if: (1) $\mathbf{S}$ contains the middle node, $V_k$, of a chain $V_i \to V_k \to V_j$, $V_i \leftarrow V_k \leftarrow V_j$, or  $V_i \leftarrow V_k \to V_j$ in path $p$; (2) when path $p$ contains a collider $V_i \to V_k \leftarrow V_j$, node $V_k$ or any descendant of $V_k$ is not in $\mathbf{S}$. 
\end{definition}

A DAG is causal if the edge between two variables is interpreted as a causal relationship. To learn a causal DAG from data, another assumption is needed.  

\begin{definition}[Causal sufficiency~\cite{spirtes2000causation}]
	For every pair of variables observed in a data set, all their common causes are also observed in the data set.
\end{definition}





With the Markov condition, faithfulness assumption and causal sufficiency assumption, a DAG learned from data can be considered as a causal DAG, where a directed edge represents a causal relationship.


The causal (treatment) effect indicates the strength of a causal relationship. To define the causal effect, we introduce the concept of intervention, which forces a variable to take a value, often denoted by a \emph{do} operator in literature~\cite{Pearl2009_Book}. A \emph{do} operation mimics an intervention in a real world experiment. For example, $do(X=1)$ means $X$ is intervened to take value 1. $P(y \mid do (X=1))$ is an interventional probability. 







The direct effect is a measure of treatment effect directly from the interventional variable to the outcome when all other variables are controlled in an ideal experiment. 

\begin{definition}[Direct effect~\cite{Pearl2009_Book}]
	\label{def:CDE}
	The direct effect of $X$ on $Y$ is given by $P(Y \mid do (X=x), do (\mathbf{V}_{\backslash XY} = \mathbf{v}))$ where $\mathbf{V}$ means all variables in the system, and $\mathbf{V}_{\backslash XY}$ indicates all other variables except $X$ and $Y$.
\end{definition}

To infer interventional probabilities (by reducing them to normal conditional probabilities, the set of rules of $do$-calculus~\cite{Pearl2009_Book} and a causal DAG are necessary. 

The local causal structure around a variable, e.g.$V$, provides the most information for predicting $V$.  Markov blanket is a local structure~\cite{PearlProbabilisticBook1988,Aliferis:2010}. 

\begin{definition}[Markov blanket] In a DAG, the Markov blanket of a variable $V$, denoted as, $\mb(V)$,  is unique and consists of its parents, children, and the parents of its children. 
\end{definition}      

The Markov blanket renders all other variables independent of $V$ given its Markov blanket. This is why it is frequently used in feature selection~\cite{YuUnifiedFeatureSelection2021}. Importantly, a Markov blanket can be identified in data. 

Another important local structure is called PC set (Parent and Child)~\cite{Aliferis:2010}, i.e. $\pc(V)$, which includes the parents and children of $V$. When there are no descendants of $V$ in a system, $\pc(V) = \mb(V) = \pa(Y)$. $\pc(V)$ is an important set of variables for predicting and explaining $V$.

\section{Explanatory causal effects}
Let us consider a classifier $Y = f(\mathbf{X})$ to be explained where $Y$ is the predicted outcome and $\mathbf{X}$ is a set of features. $X_i$ for $1 \le i \le m$ and $Y$ are binary. 
%
%
We assume the Markov condition, causal sufficiency and faithfulness are satisfied and we do not repeat the statements in each theorem or definition. In this paper, we use the terms variable and feature exchangeably.

\begin{figure}[tb]
	\begin{center}
			\includegraphics[width=0.70\textwidth]{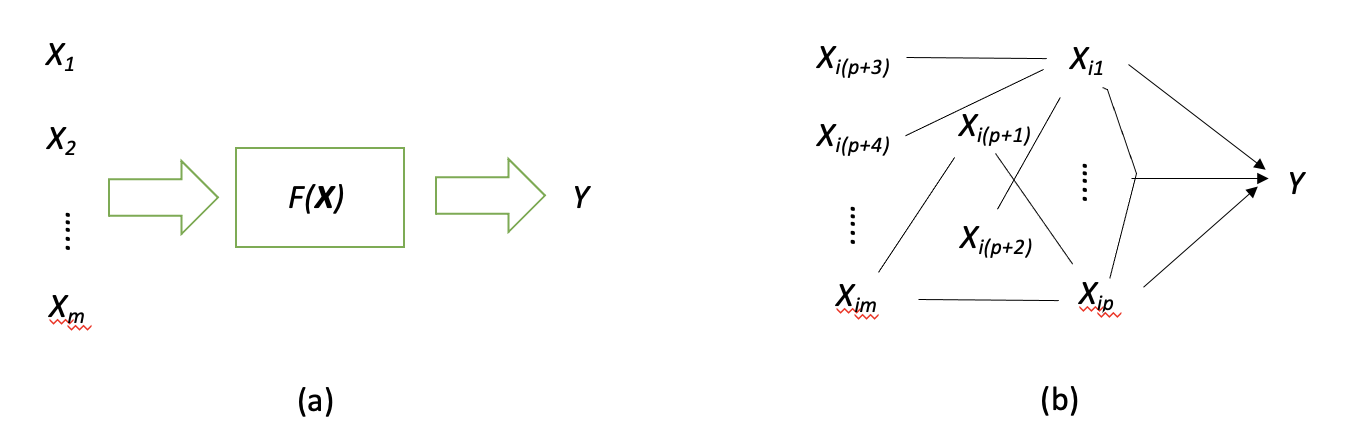}
			\caption{(a) A classifier $Y = f(\mathbf{X})$ built from a black box model. $X_1, X_2, \dots, X_m$ are inputs of $f(\mathbf{X})$, and $Y$ is the output. $X_1, X_2, \dots, X_m$ are considered as potential causes of $Y$ in this system. (b) A causal model of the classifier is represented as a Partially Directed Acyclic Graph (PDAG). Note that the undirected edges mean that their directions are likely uncertain from data since many edges will be left non-orientated when learning a DAG from data. }
			\label{fig_CusalModel}
		\end{center}
\end{figure}

The aim of the work is to explain the classifier $Y = f(\mathbf{X})$. The system under our consideration is shown in Figure~\ref{fig_CusalModel}(a). In this system, $X_1, X_2, \ldots X_m$, i.e. features, are potential causes of $Y$ since they are inputs and $Y$ is the output, i.e. predicted outcome. $Y$ does not have descendants.  Figure~\ref{fig_CusalModel}(b) shows a causal model for the classifier $Y = f(\mathbf{X})$ learned from data with features and predicted outcomes and it is a Partially Directed Acyclic Graph (PDAG) containing non-oriented edges. This is because, from data, only an equivalence class of DAGs can be learned, not a unique DAG. A group of DAGs encoding the same dependencies and dependencies among variables is called an equivalence class of DAGs. For example, DAGs $X_1 \to X_2 \to X_3$, $X_1 \leftarrow X_2 \leftarrow X_3$, and $X_1 \leftarrow X_2 \to X_3$ belong to the same equivalence of class since they encode $X_1 \indep X_3 \mid X_2$ where $\indep$ indicates independence.  In this PDAG, the parents of $Y$ are certain since there are no descendants of $Y$. Given the sufficiency of data and no errors in conditional independence tests, the parents of $Y$ can be identified.

\textbf{Remarks} A causal graph is essential for causal effect estimation, but there is no causal graph in many applications. When learning a causal graph from data, only an equivalence class of causal graphs can be identified.  Due to the uncertainty of the DAG, a number of well known causal effects, such as the Average Treatment Effect (ATE)~\cite{ImbensRubin2015_Book,Pearl2009_Book}, Conditional Average Treatment Effect (CATE)~\cite{abrevaya2015estimating,AtheyImbens2016_PNAS}, natural direct causal effect~\cite{Pearl2009_Book} cannot be estimated, and hence cannot be used for explanations. Counterfactual inference~\cite{Pearl2009_Book} cannot be conducted when there is no unique DAG either. 

To understand the function of $f(\mathbf{X})$, we do the following hypothetically ideal experiment. We change the value of variable $X_i$ while setting values of other variables  $\mathbf{X} \backslash X_i$ to some controlled (fixed) values to observe the change of $Y$. Because this is an ideal experiment, the system is fully manipulable and the values of all variables can be set to whatever values irrespective of the values of other variables. We use Pearl's direct effect to quantify the causal effect of variable $X_i$ when all other variables are controlled to $\mathbf{X' = x'}$ where $\mathbf{X' = X}\backslash \{X_i\}$. 


\begin{definition}[Explanatory causal effect ($\ece$)]
	\label{def:ECE}
	The explanatory causal effect of $X_i$ on $Y$ with the values of other variables in the system set to $\mathbf{x'}$ is $\ece (X_i, Y \mid \mathbf{x'}) = P(y \mid do (X_i=1), do (\mathbf{X' = x'})) - P(y \mid do (X_i=0), do (\mathbf{X' = x'}))$.
\end{definition}

$\ece$ is the effect of the change of a variable on the predicted outcome when all other variables are intervened to some values. The explanatory causal effect can be identified in data as follows in the system.  


Let $\pa'(Y) = \pa(Y) \backslash \{X_i\}$, and $\mathbf{p'}$ be a value of $\pa'(Y)$. Hence, $\mathbf{p' \subseteq x'}$. Let $y$ stand for $Y=1$ for brevity.

\begin{theorem}
	\label{theo:main}
	In the problem setting, the explanatory causal effect  can be estimated as follows:  $\ece (X_i, Y \mid \mathbf{x'}) =  P(y \mid X_i=1, \mathbf{p'}) - P(y \mid X_i=0, \mathbf{p'})$ if $ X_i \in \pa(Y)$; and 
$\ece (X_i, Y, \mathbf{x'}) = 0$ otherwise. 
\end{theorem}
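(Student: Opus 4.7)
The plan is to handle the two cases separately, using in both a single structural observation: the definition of $\ece$ intervenes on every variable in the system other than $Y$, so in particular on every member of $\pa(Y)$. I will reduce the interventional probability to an observational one that conditions only on $\pa(Y)$ by applying Rule~3 of $do$-calculus to strip interventions on non-parents of $Y$, and then Rule~2 to convert the surviving $do(\pa(Y))$ into ordinary conditioning. The stated fact that $Y$ has no descendants is what makes the required $d$-separations go through.

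For the case $X_i\in\pa(Y)$, let $\pa'(Y)=\pa(Y)\backslash\{X_i\}$ and $\mathbf{B}=\mathbf{X}\backslash\pa(Y)$, so that $\mathbf{X'}$ decomposes as $\pa'(Y)\cup\mathbf{B}$ and the value $\mathbf{x'}$ sets $\pa'(Y)$ to $\mathbf{p'}$ and $\mathbf{B}$ to some $\mathbf{b}$. First I would apply Rule~3 to delete $do(\mathbf{B}=\mathbf{b})$ from $P(y\mid do(X_i),\,do(\pa'(Y)=\mathbf{p'}),\,do(\mathbf{B}=\mathbf{b}))$: in the mutilated graph $\mathcal{G}_{\overline{\mathbf{X}}}$ every node of $\mathbf{X}$ has lost its incoming edges, so any path from $\mathbf{B}$ to $Y$ must terminate with an edge $V\to Y$ for some $V\in\pa(Y)$; because $V$ has no incoming edges in the mutilated graph, $V$ is a non-collider on the path and lies in the conditioning set $\{X_i\}\cup\pa'(Y)$, so the path is blocked. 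Next, Rule~2 converts the surviving $do(\pa(Y))$ into conditioning: in $\mathcal{G}_{\underline{\pa(Y)}}$ all outgoing edges from $\pa(Y)$ are cut and $Y$ has no descendants, so $Y$ is graphically disconnected from $\pa(Y)$ there and the required independence is immediate. Plugging in $X_i=1$ and $X_i=0$ then yields the claimed formula.

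For the case $X_i\notin\pa(Y)$, we have $\pa(Y)\subseteq\mathbf{X'}$ and $X_i\in\mathbf{X}\backslash\pa(Y)$. The same two-step reduction, now with the non-parent set $\mathbf{B}=\mathbf{X}\backslash\pa(Y)$ enlarged to contain $X_i$ itself, collapses $P(y\mid do(X_i=x_i),\,do(\mathbf{X'}=\mathbf{x'}))$ to $P(y\mid\pa(Y)=\mathbf{p'})$, an expression with no dependence on $x_i$. The two terms in the definition of $\ece$ are therefore identical and cancel, giving $\ece=0$.

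The main obstacle I anticipate is writing the $d$-separation argument for Rule~3 cleanly. The key is to observe that after cutting every incoming edge to $\mathbf{X}$, no element of $\pa(Y)$ can appear as a collider on a path from $\mathbf{B}$ into $Y$, so conditioning on $\pa(Y)$ always blocks every such path. I would make this explicit by a short path-analysis argument, and I would use the no-descendants-of-$Y$ assumption twice: once to ensure that every path from $\mathbf{B}$ into $Y$ must enter through $\pa(Y)$ (no detours via descendants of $Y$), and once to trivialise the Rule~2 step. Once this structural step is in place, the remainder of the proof is routine bookkeeping.
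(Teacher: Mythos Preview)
Your proposal is correct and follows essentially the same route as the paper. The only differences are presentational: the paper invokes the Markov condition directly (rather than Rule~3) to strip away the interventions on non-parents, and it applies Rule~2 iteratively, one parent at a time, instead of converting $do(\pa(Y))$ to conditioning in a single step; your version is arguably cleaner on both counts, and your observation that in $\mathcal{G}_{\overline{\mathbf{X}}}$ the only remaining edges are $\pa(Y)\to Y$ makes the Rule~3 $d$-separation immediate.
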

%
The proof is in the Appendix	

Estimating $\ece$ does not need a complete causal graph, but a local causal graph, i.e. the parents of $Y$. In our problem setting, $\ece$ can be estimated uniquely from data since $\pa(Y) = \pc(Y)$ can be uncovered in data. The complexity for finding a complete causal graph is super-exponential to the number of variables~\cite{Chickering:2004}, but the complexity for finding a local causal structure is polynomial to the number of variables~\cite{PC-select-2010,Tsamardinos2006_MMPC,Aliferis2003_Hiton}. A local causal structure can be found more efficiently and reliably in data than a global causal structure. It has been shown that a PC set can be found in a data set with a few thousands of variables~\cite{PC-select-2010}. Therefore, the explanations based on $\ece$ apply to high dimensional data sets. 



\textbf{Remark:} The direct effect in Theorems 4.5.3 and 4.4.6~\cite{Pearl2009_Book} needs an ordered sequence of value assignments, i.e. $do()$, because it is with the semi-Markovian model which allows latent common causes  Therefore, it cannot be used for explanations since its estimation needs a known DAG with latent variables. We assume the causal sufficiency and do not consider latent common causes in the system. We have shown that, when the causal sufficiency is satisfied, for any ordered sequence of value assignments among $\pa(Y)$, $P(y \mid do(\pa(Y)))$ is identifiable in the Appendix. This means that in our case, the value assignments do not need to be in a certain ordered sequence, and hence we can alternate $X_i \in \pa(Y)$ as the treatment variable and assign values to other variables as a controlled environment.

The $\ece$ represents the effect of a variable on the predicted outcome by manipulating the variable while all other variables in the system are set to certain values. The $\ece$ is used for prediction level (local) explanation, i.e. indicating the contribution of a single variable to the prediction when all other variables are fixed to specific values. 

When the contributions of a variable in all controlled environments are aggregated, the average explanatory effect can be used for model level (global) explanation, i.e. on average, the contribution that a variable provides to a predictive model. 

\begin{definition} [Average explanatory causal effect ($\overline{\ece}$)]
	\label{def_aece} 
	Let $\pa(Y)$ include all direct causes of $Y$ and $X_i \in \pa(Y)$, the average explanatory causal effect  $\overline{\ece}(X_i, Y) = \sum_{\mathbf{p'}}(P(y \mid X_i = 1, \pa'(Y) = \mathbf{p'}) - P(y \mid X_i = 0, \pa'(Y) = \mathbf{p'}))P(\mathbf{p'})$. For $X_j \notin \pa(Y)$,  $\overline{\ece}(X_j, Y) = 0$.  
\end{definition}	

$\overline{\ece}$ can be identified in data uniquely since $\overline{\ece}$ is aggregated over values of $\pa'(Y)$. 

\textbf{Remark:} The above discussions reminisce the Structural Equation Model (SEM) where $Y = f(\pa(Y), \epsilon)$ and $\epsilon$ represents errors that are independent of all observed variables in the system. Assuming that the data is generated from a DAG representing the causal mechanism of a system. When the parents of $Y$ can be recovered from the data (or are known), the system, i.e. $f(\cdot)$, is linear and errors are multivariate normal, the direct causal effect (the change of $Y$ caused by a change of a parent of $Y$, e.g. $X_i$ from $x$ to $x+1$) is represented as the regression coefficient $\beta_i$ of $X_i$. When the system is non-linear, such as in the binary case, the above conclusion does not hold~\cite{Pearl2009_Book}. A non-linear model, such as a deep learning model, can be built for $f(\pa(Y), \epsilon)$, but the model loses transparency for explanations. In a binary case, logistic regression may be used to model  $Y = f(\pa(Y), \epsilon)$ regardless of its causal interpretation. The regression coefficients give the overall feature importance but do not provide a local explanation when other features $\mathbf{X} \backslash X_i$ take certain values. 



\begin{definition} [Explanatory causes]
	\label{def_explanatorycauses} 
	A variable $X_i$ is called an explanatory cause if $|\overline{\ece}(X_i, Y)| \ge \epsilon$ where $\epsilon$ is a small positive number.  
\end{definition}	

An explanatory cause is a parent of $Y$ and has a non-zero average explanatory causal effect on $Y$. Explanatory causes are used to explain a model. To explain the prediction of an instance, $\pa(Y)$ is used since an $\ece$ may not follow the average $\overline{\ece}$. 

In the rest of paper, causes are explanatory causes.

\section{Combined causes and interactions}

The accuracy of machine learning models is mostly due to using combinations of multiple variables. We need to consider combined variables for highly accurate explanations.




\begin{definition}[Combined variables]
	Binary variable $W_\mathbf{q}= (X_{i_1}, X_{i_2}, \ldots, X_{i_k})_{\mathbf{q}}$, where $\mathbf{q}$ is a value vector of $W = (X_{i_1}, X_{i_2}, \ldots, X_{i_k})$, is a combined variable if $P((X_{i_1}, X_{i_2}, \ldots, X_{i_k}) = \mathbf{q}) > \sigma$, where $\sigma$ is the minimum frequency requirement. $W_\mathbf{q}=1$ when $(X_{i_1}, X_{i_2}, \ldots, X_{i_k}) = \mathbf{q}$ and $W_\mathbf{q}=0$ otherwise. $V_\mathbf{r}$ is a component variable of $W_\mathbf{q}$ if $V \subset W$ and $\mathbf{r} \subset \mathbf{q}$.  
\end{definition}

The minimum frequency requirement ensures that a combined variable has a certain explanatory power. 

We consider two types of combined variables. The first type aims to combine non-parents of $Y$ to be a cause of $Y$. Non-parents are non-causes, and the purpose is to recover hidden combined causes among non-causes.  





\begin{definition}[Combined causes]
	Let $W_\mathbf{q}$ be a combined variable consisting of non-parents of $Y$. $W_\mathbf{q}$ is a combined cause of $Y$ if $|\overline{\ece}(W_\mathbf{q}, Y)| \ge 0$ where  $\overline{\ece}(W_\mathbf{q}, Y) = \sum_{\mathbf{p}} ((P(y \mid W_\mathbf{q}=1, \pa(Y)= \mathbf{p}) - P(y \mid W_\mathbf{q} = 0, \pa(Y)=\mathbf{p}))P(\mathbf{p})$.  
\end{definition}



The second type includes at least one parent of $Y$. The parent(s) interact with (each other and) other variables to make the average direct causal effect stronger than without interaction. 

\begin{definition}[Interactions]
	Let $W_\mathbf{q}$ be a combined variable including at least one parent of $Y$.  Let $X_j$ be the parent with the highest $\overline{\ece}$ among all parents in the combined variable. $W_\mathbf{q}$ represents an interaction if $|\overline{\ece}(W_\mathbf{q}, Y)| > |\overline{\ece}(X_j, Y)|$ where $\overline{\ece}(W_\mathbf{q}, Y) = \sum_{\mathbf{p'}} (P(y \mid W_\mathbf{q}=1, \pa'(Y)= \mathbf{p'}) - P(y \mid W_\mathbf{q} = 0, \pa'(Y)=\mathbf{p'}))P(\mathbf{p'})$ and $\pa'(Y) = \pa(Y) \backslash (\pa(Y) \cap W)$.      
\end{definition}






\begin{algorithm}[t]
	\caption{Explanatory Causal Effect based Interpretation (ECEI)} 	\label{alg_Explanation}
	{\textbf Input}: Data set $\mathbf{D}$ of $\mathbf{X}=\{X_1, X_2, \ldots, X_m\}$ and $Y$ containing predictions of a black box model to be explained. The maximum length of combined variables $k$, the minimum support $\alpha$, a parameter for non-zero test $\epsilon$, and an input $\mathbf{x}$ to be explained. \\
	{\textbf Output}: Ranked variables for global and local explanations
	\begin{algorithmic}[1]
		\STATE {call a PC algorithm to find $\pc(Y)$ and let $\pa(Y) = \pc(Y)$}\STATE {estimate $\overline{\ece}$ of all parents in $\pa(Y)$}
		\STATE {find closed frequent patterns wrt $\alpha$ up to the length of $k$}
		\STATE {identify combined causes and interactions}
		\STATE {let $\epa(Y)$ contain $\pa(Y)$, combined causes and interactions}
		\STATE {calculate $\overline{\eece}$ of each parent in $\epa(Y)$}
		\STATE {remove redundant combined variables in $\epa(Y)$}		
		\STATE {output $\epa(Y)$ with variables ranked by $\overline{\eece}$  as a global explanations of the model}
		\STATE {calculate $\eece$ of each parent in $\epa(Y)$ based on $\mathbf{x}$}
		\STATE {output $\epa(Y)$ with variables ranked by $\eece$ for local explanation of $\mathbf{x}$}
	\end{algorithmic}
\end{algorithm}

\section{ECEI algorithm \& implementation} 

We will use the parents of $Y$ (including causes), combined causes and interactions for explanations.  We will update $\ece$ and $\overline{\ece}$ with combined causes and interactions. Let $\epa$ be the Extended Parent Set of $Y$ containing all variables in $\pa(Y)$, combined causes of $Y$ and interactions.

\subsection{Local explanation}

For an instance $\mathbf{x}$, the local explanation gives a rank of the extended parents based on their extended $\ece$s, which are defined in the following. 

\begin{definition}[Extended $\ece$]
	\label{def:ECDE}
	Given an instance $\mathbf{x}$. For each $X_i \in \epa(Y)$, the explanatory causal effect of $X_i$ on $Y$ on $\mathbf{x}$ is $\eece(X_i, Y \mid \mathbf{x}) =  P (y \mid X_i = 1, \mathbf{P'=p'}) -  P (y \mid X_i = 0, \mathbf{P'=p'})$ where $\mathbf{P'} = \epa(Y) \backslash (X_i \cup \mathbf{S})$ where $\mathbf{S}$ is the set of variables in $\epa(Y)$, each of which contains $X_i$ or any part of $X_i$ (when $X_i$ is an interaction or a combined direct cause). $\mathbf{p}'$ matches the values in $\mathbf{x}$.
\end{definition} 

$\eece(X_i, Y, \mathbf{x})$ is the causal effect specific to the instance $\mathbf{x}$. 


\subsection{Global explanation}

The global explanation provides a rank of the extended parents based on their extended $\overline{\ece}$s, which are defined in the following. 

\begin{definition}[Extended $\overline{\ece}$]
	\label{def_EACDE}
	For  $X_i \in \epa(Y)$, $\overline{\eece}(X_i, Y) = \sum_{\mathbf{p'}}(P(y \mid X_i = 1, \mathbf{P'=p'}) - P(y \mid X_i = 0, \mathbf{P'=p'}))P(\mathbf{p'})$ where  $\mathbf{P'}$ includes all variables in $\epa(Y)$ which are associated with $X_i$ but excluding all variables containing $X_i$ or any part of $X_i$ (when $X_i$ is an interaction or a combined cause).   
\end{definition}	

The difference between the above definition with the definition of $\overline{\ece}$ is at the constitution of $\mathbf{P'}$. Firstly, an interaction and its components form alternative explanations and we do not consider both simultaneously. Hence, we only consider them separately for calculating $\overline{\eece}$. Secondly, when two variables are not associated, all paths between them are $d$-separated (or there is no path between them). Hence, other parents that are not associated with $X_i$ are omitted in the conditioning set.



\subsection{Implementation}    

The proposed algorithm for ECE based explanation is listed in Algorithm~\ref{alg_Explanation}.  Detailed explanations are provided in the Appendix.

\begin{table}
	\scriptsize
	\center
	\begin{tabular}{clc}
		\multicolumn{3}{c}{ECEI (average for a RF model)}	\\
		\hline   ID & Features  & $\overline{\eece}$  \\
		\hline   1 & Married  & $0.382$   \\  
		2 &		Education.num.12  & $0.322$  \\
		3 & 		Agelt30 & $-0.262$ \\
		4 & 		Prof  & $0.231$  \\
		5 & Education.num.9  & $-0.206$ \\
		& $\dots$ & \\
		8 & \{Self$\_$emp, Male, US\} = \{0, 0, 1\} &  $0.187$ \\
		\hline 
	\end{tabular}
	\begin{tabular}{cc}
		Logistic regression coefficients  & Importances by permutation \\
		\includegraphics[width=0.23 \textwidth]{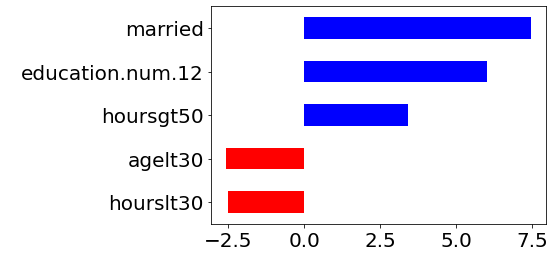} & \raisebox{0\height}{	\includegraphics[width=0.23 \textwidth]{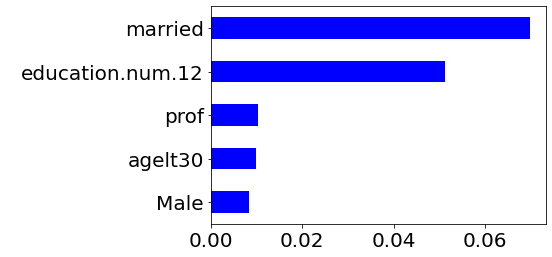}} 
	\end{tabular}
	\caption{Global feature importance in the Adult data set. 
	}
	\label{tab_AdultGlobalResults}
\end{table}

\begin{table}
	\scriptsize
	\center
	\begin{tabular}{clc}
		\multicolumn{3}{c}{ECEI (average for a RF model)}	\\
		\hline   ID & Features  & $\overline{\eece}$  \\
		\hline   
		1 &		Status.1 & $-0.255$ \\
		2 & Credit.1 & $0.184$ \\
		3 & Duration.1 & $0.170$ \\
		4 & Housing.2 & $0.168$ \\
		5 & Credit.history.5 & $0.162$ \\
		& $\dots$ & \\
		10 & \{Debtors.1,Job.3=1,Foreign.1\}=\{1,1,1\} & $0.029$ \\
		\hline 
	\end{tabular}
	\begin{tabular}{cc}
		Logistic regression coefficients  & Importances by permutation \\
		\includegraphics[width=0.23 \textwidth]{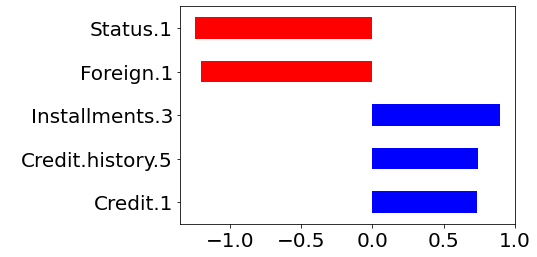} & \raisebox{0\height}{	\includegraphics[width=0.23 \textwidth]{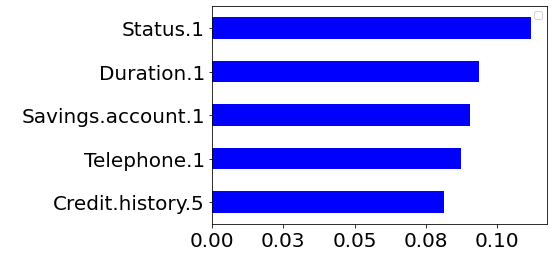}} 
	\end{tabular}
	\caption{Global feature importance in the German credit data set. 
	}
	\label{tab_GermanGlobalResults}
\end{table}

\section{Experiments}

The experiments aim at showing how the proposed ECEs are used for explanations. Since simplicity and transparency are characteristics of  ECEs, we compare ECEs with four well known simple and transparent measures for feature importance:  logistic regression coefficients and feature importance by permutation~\cite{molnar2019}  for global feature importance; and LIME scores~\cite{ribeiro2016should} and Shapley values~\cite{Shapley-ValuePredictionContribution} for the local explanation. See Appendix for other experiment details. 

The evaluation of explanations is very challenging. In experiments, we contrast ECE based explanations with explanations using other measures. We expect certain consistencies between them since all measures should capture some common relationships. We then explain why the differences resulted from our method are reasonable based on our understanding and they are desirable in explanations.

\begin{table}
	\scriptsize
	\center
	\begin{tabular}{clc}
		\multicolumn{3}{c}{ECEI (average for a RF model)}	\\
		\hline   ID & Features  & $\overline{\eece}$  \\
		\hline   
		1 & Odor.7 & $ 0.796$ \\
		2 & Gill.size.2 & $-0.586$ \\
		3 & Bruises.2  & $-0.509$ \\
		4 & Population.5 & $-0.443$ \\
		5 & \{Stalk.surface.below.ring.4,  &  \\
           & Ring.number.2\} = \{1, 1\} & $0.346$ \\
		\hline 
	\end{tabular}
	\begin{tabular}{cc}
		Logistic regression coefficients  & Importances by permutation \\
		\includegraphics[width=0.23 \textwidth]{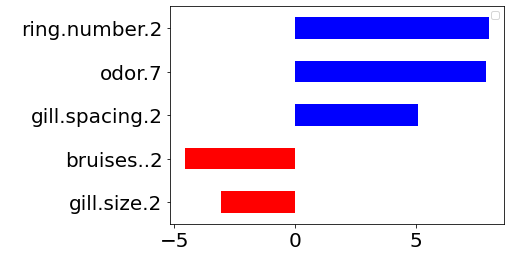} & \raisebox{0\height}{	\includegraphics[width=0.23 \textwidth]{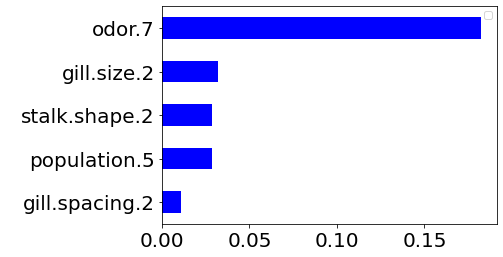}} 
	\end{tabular}
	\caption{Global feature importance in the Mushroom data set. 
	}
	\label{tab_MushroomGlobalResults}
\end{table}

\begin{table}[t]
	\scriptsize
	\center
	\begin{tabular}{lc}
		\multicolumn{2}{c}{ECEI (for an instance prediction by RF)}	\\
		\hline   Feature value contributes to ($\to$)  the predicted class & $\eece$  \\
		\hline   Married = 0 $\to$ Class = 0 (low income)    & $0.307$   \\  
		Education.num.12 = 0 $\to$ Class = 0 & $0.097$  \\
		Prof = 0 $\to$ Class = 0  & $0.061$  \\
		hoursgt50=0  $\to$ Class = 0   & $0.042$ \\
		agelt30=0 $\to$ Class = 0 & $0.028$ \\ 
		\hline 
	\end{tabular}
	\begin{tabular}{cc}
		LIME scores  & Shapley values	\\
		\includegraphics[width=0.15 \textwidth]{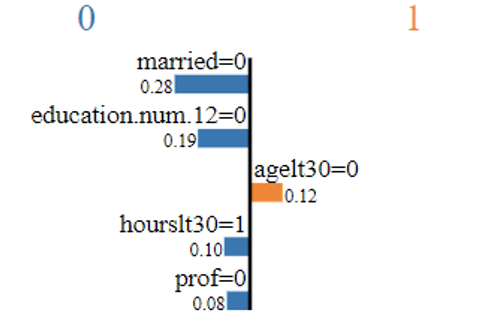} & \raisebox{-0.15\height}{	\includegraphics[width=0.25 \textwidth]{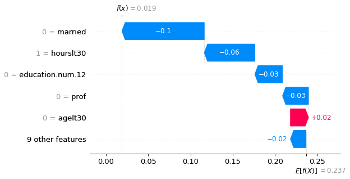}} 
	\end{tabular}
	\caption{Local explanation of the prediction on an instance in the Adult data set. 
	}
	\label{tab_AdultlocalResults}
\end{table}

\begin{table}[t]
	\scriptsize
	\center
	\begin{tabular}{lc}
		\multicolumn{2}{c}{ECEI (for an instance prediction by RF)}	\\
		\hline   Feature value contributes to $\to$  the predicted class & $\eece$  \\
		\hline		Status.1 = 0 $\to$ Class = 1 (good credit) & $0.389$ \\  
		Housing.2 = 1 $\to$ C=1 (short for Class = 1)  & $0.262$   \\
		\{Debtors.1, Job.3, Foreign.1\} = \{1, 1, 1\} $\to$ C=1 & $0.152$ \\
		\{Job.3, Telephone.1, Foreign.1\} = \{1, 1, 1\} $\to$ C=1 & $-0.131$ \\
		Credit.history.5 = 0 $\to$ C=1  & $-0.130$ \\
		\hline 
	\end{tabular}
	\begin{tabular}{cc}
		LIME scores  & Shapley values	\\
		\includegraphics[width=0.19 \textwidth]{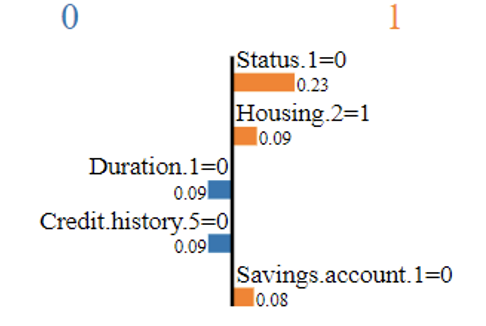} & \raisebox{-0.0\height}{	\includegraphics[width=0.25 \textwidth]{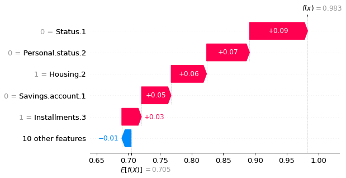}} 
	\end{tabular}
	\caption{Local explanation of the prediction on an instance in the German credit data set. 
	}
	\label{tab_GermanLocalResults}
\end{table}

\begin{table}[b]
	\scriptsize
	\center
	\begin{tabular}{lc}
		\multicolumn{2}{c}{ECEI (for an instance prediction by RF)}	\\
		\hline   Feature value contributes to ($\to$)  the predicted class  & $\eece$  \\
		\hline		
		Bruises.2 = 0 $\to$ Class = 1 (edible) &  $0.893$ \\
		Gill.size.2 = 0 $\to$  Class = 1 & $0.794$ \\
		Stalk.color.above.ring.8 = 1 $\to$ Class=1 & $0.783$ \\
		Population.5 = 0 $\to$ Class = 1 & $0.757$ \\
		\{Stalk.shape.2,stalk.surface.below.ring.4\}=\{0,1\}
		&  \\
		$\to$ Class=1 &  $0.440$ \\
		
		\hline 
	\end{tabular}
	\begin{tabular}{cc}
		LIME scores  & Shapley values	\\
		\includegraphics[width=0.20 \textwidth]{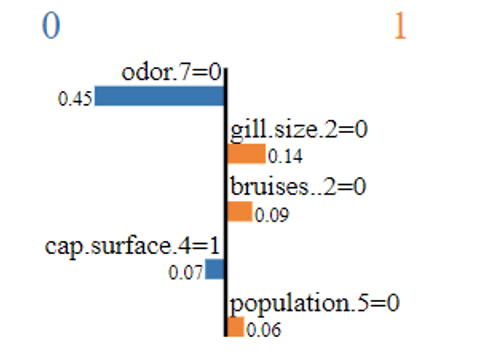} & \raisebox{-0.0\height} {\includegraphics[width=0.25\textwidth]{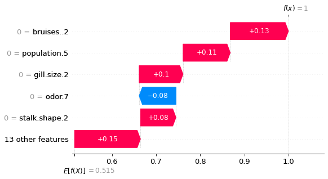}} 
	\end{tabular}
	\caption{Local explanation of the prediction on an instance in the Mushroom data set. 
	}
	\label{tab_MushroomLocalResults}
\end{table}

\subsection{Global explanations}
Global explanations for random forest models on the three data sets are shown in Tables~\ref{tab_AdultGlobalResults}, \ref{tab_GermanGlobalResults} and \ref{tab_MushroomGlobalResults}. 

The explanations of ECEs are meaningful. In the Adult data set,  Education.num.12 ( $>12$ means college education) and Education.num.9 ($< 9$ means not completed high school) are important factors affecting the salary. Marriage status indicates stable jobs and good salaries (note the census data were collected in 1994). Professional occupation is also a good indicator of salary. Agele30 (age $<30$) means that inexperienced employees mostly receive lower salaries. The combined variables (American females who did not work as self employers) indicate a cohort of American citizens who have jobs in the government and private sectors and their salaries are likely high. 

In the German credit data set, Status.1 (balance in the existing checking account $< 0$), Credit.1 (the credit amount $<$ median), Duration.1 (the duration $<$ median), Housing.2 (owning a house), and Credit.history.5 (critical account/other credits existing) are good indicators of good/bad credit.  \{Debtors.1, Job.3, Foreign.1\} = \{1, 1, 1\}  means ``no other debtors/guarantors", ``skilled employees/officials", ``and foreign workers" and this group of people mostly have good credit. 

In the Mushroom data set, Odor.7 (no odor), Gill.size.2 (narrow), Bruises.2 (no bruise), Population.5 (several) and \{Stalk.surface.below.ring, Ring.number\} = \{1, 1\} are indicators for edible or poisonous mushrooms. 

The explanations of ECEs by single variables are quite consistent with those of logistic regression coefficients and feature importance by permutation. In the Adult data set, 3/5 of the top features are consistent with those identified by logistic regression coefficients and 4/5 are consistent with those identified by permutation. In the German Credit data set, 3/5 of the top features are consistent with those identified by both logistic regression coefficients permutation. In the Mushroom data set,  3/4 features of single variables are consistent with those identified by both logistic regression coefficients and permutation. 

The combined variables provide insightful explanations. In the Adult data set, feature importance indicates that men have a certain advantage in earning a high salary over women (an unfortunate truth in the 1990s). Our approach identifies a subgroup of females who are likely to earn a high salary, i.e American females for governments or private companies. In the German Credit data set, Foreign.1 (foreign workers) indicates a negative impact on good credit. Our approach identifies a subgroup of foreign workers likely to have good credit, i.e. skilled/official foreign workers with no other debtors. In the Mushroom data set, stalk.surface.below.ring is smoothly and ring.number = 1 is a good indicator of edible mushrooms by our approach.

\subsection{Local explanations}

Local explanations for the prediction made by the random forest models on an instance in each of the three data sets are shown in Tables~\ref{tab_AdultlocalResults}, \ref{tab_GermanLocalResults} and \ref{tab_MushroomLocalResults}.

The explanations for a prediction of an instance, which may not be consistent with the global explanations, will be determined by the values of the instance. For explaining the prediction for an instance in the Adult data set, Table~\ref{tab_AdultlocalResults} shows the top five contributors for the prediction are from the globally ranked features (\#1, \#2, \#4, \#7, \#6). All five contributors provide consistent explanations of the predicted class.

For the prediction for an instance in the German credit data set, Table~\ref{tab_GermanLocalResults} shows the top five contributors for the prediction are from the globally ranked features (\#1, \#4, \#10, \#9, \#5). The first three contributors provide consistent explanations of the predicted class. Note that,  in this instance, the combined variable \{Debtors.1=1,Job.3=1,Foreign.1=1\} becomes a strong support for the prediction even though its global importance is not high.


For the prediction for the instance in the Mushroom data set, Table~\ref{tab_MushroomLocalResults} shows the top five contributors for the prediction are from the globally ranked features (\#3, \#2, \#10, \#4, \#9). All provide consistent explanations of the predicted class.  Note in the LIME score, Odor.7 = 0 provides a strong but inconsistent explanation of the prediction. In contrast, Odor.7 = 0 has not shown in our top 5 list.


The explanations of ECEs by single variables are quite consistent with those of LIME scores and Shapley values. In the Adult data set, 4/5 features of single variables are shared by three explainers. In German Credit data set, 3/3 features of single variables are shared by explanations of LIME score and 2/3 are shared by explanations of Shapley values. In the Mushroom data set,  3/4 of single variables are shared by three explainers. 

\begin{table}[t]
	\scriptsize
	\center
	\begin{tabular}{c}
		Top 5 superpixels of ECEI   	\\
		\includegraphics[width=0.44 \textwidth]{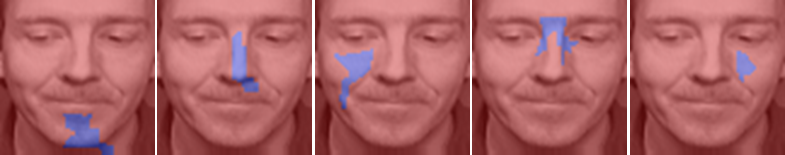} 	
	\end{tabular}
	\begin{tabular}{c}
		Top 5 superpixels of LIME \\
		\includegraphics[width=0.44 \textwidth]{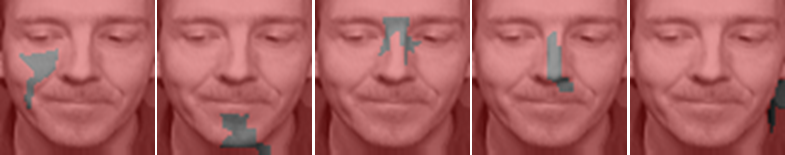} 	\end{tabular}
	\caption{Explanations for an image classified by a random forest model. The top 4 superpixels of ECEI are consistent with LIME's. The fifth superpixel found by ECEI  is meaningful.}
	\label{tab_PictureResults}
\end{table}

\begin{table}[t]
	\scriptsize
	\center
	\begin{tabular}{lc}
		\multicolumn{2}{c}{ECEI (for a text document classified by RF )}	\\
		\hline   Feature values towards prediction  & $\eece$  \\
		\hline		
		{rutgers} $\to$ christian & $0.0326$  \\
		\{jesus, god\} $\to$ christian & $0.0287$ \\
		\{indiana, god\} $\to$ christian & $0.0273$ \\
		\{indiana, jesus\} $\to$ christian & $0.0267$ \\
		{athos} $\to$ christian & $0.0266$ \\
		\hline 
	\end{tabular}
	\begin{tabular}{c}
		LIME scores  	\\
		\includegraphics[width=0.20 \textwidth]{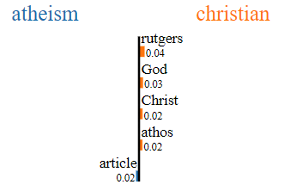} 	\end{tabular}
	\caption{Explanations for text classification. The explanations of ECEI are quite consistent with those of LIME.  
	}
	\label{tab_TextResults}
\end{table}

\subsection{Explaining text and image classifiers}

When texts and images are converted to interpretable representations~\cite{ribeiro2016should}, such as binary vectors indicating the presence/absence of words, or super-pixels (contiguous patches of similar pixels), the proposed ECEI can be used for text and image classification explanations. 

Explanations for an image by a random forest model are shown in  Table~\ref{tab_PictureResults}. ECEI uses one more meaningful superpixel than LIME.

Explanations for a text document classified by a random forest model are shown in Table~\ref{tab_TextResults}. The explanations by ECEI are reasonable and consistent with those provided by LIME.

\section{Conclusion}

In this paper, 
We have defined the explanatory causal effect (ECE) based on a hypothetical ideal experiment, which can be estimated from data without a known causal graph. 
We have proposed an ECE based method, ECEI, for both local and global explanations.  We have used real world data sets to show that the ECEI provides meaningful explanations.
The strengths of explanatory causal effect are that it is data driven, causally interpretable, transparent, and suitable for both local and global explanations. The limitations of the work are that the explanation method does not consider latent variables and the noises of variables.  


\bibliographystyle{abbrv}
\bibliography{../../../CausalHeterogenerityJan2021, ../../../Interpretability, ../../../reference, mybibliography, ../../../causality}

\begin{thebibliography}{10}

\bibitem{abrevaya2015estimating}
J.~Abrevaya, Y.-C. Hsu, and R.~P. Lieli.
\newblock Estimating conditional average treatment effects.
\newblock {\em Journal of Business \& Economic Statistics}, 33(4):485--505,
  2015.

\bibitem{Aliferis2003_Hiton}
C.~Aliferis, I.~Tsamardinos, and A.~Statnikov.
\newblock Hiton: a novel markov blanket algorithm for optimal variable
  selection.
\newblock In {\em AMIA Annual Symposium Proceedings}, volume 2003, pages
  21--25. American Medical Informatics Association, 2003.

\bibitem{Aliferis:2010}
C.~F. Aliferis, A.~Statnikov, I.~Tsamardinos, S.~Mani, and X.~D. Koutsoukos.
\newblock Local causal and markov blanket induction for causal discovery and
  feature selection for classification part i: Algorithms and empirical
  evaluation.
\newblock {\em Journal of Machine Learning Research}, 11:171--234, 2010.

\bibitem{AltmannFeatureImportance2010}
A.~Altmann, L.~Toloşi, O.~Sander, and T.~Lengauer.
\newblock {Permutation importance: a corrected feature importance measure}.
\newblock {\em Bioinformatics}, 26(10):1340--1347, 04 2010.

\bibitem{AtheyImbens2016_PNAS}
S.~Athey and G.~Imbens.
\newblock Recursive partitioning for heterogeneous causal effects.
\newblock {\em Proceedings of the National Academy of Sciences},
  113(27):7353--7360, 2016.

\bibitem{Bache+Lichman:2013}
K.~Bache and M.~Lichman.
\newblock {UCI} machine learning repository, 2013.

\bibitem{Baehrens-Explain-Individual-Decisions2010}
D.~Baehrens, T.~Schroeter, S.~Harmeling, M.~Kawanabe, K.~Hansen, and K.-R.
  M\"{u}ller.
\newblock How to explain individual classification decisions.
\newblock {\em Journal of Machine Learning Research}, 11:1803–1831, 8 2010.

\bibitem{CounterfactualModularDeepGenerativeModels2020}
M.~Besserve, A.~Mehrjou, R.~Sun, and B.~Sch{\"{o}}lkopf.
\newblock Counterfactuals uncover the modular structure of deep generative
  models.
\newblock In {\em International Conference on Learning Representations}.
  OpenReview.net, 2020.

\bibitem{Breiman2001_RandomForest}
L.~Breiman.
\newblock Random forests.
\newblock {\em Machine Learning}, 45(1):5--32, 2001.

\bibitem{PC-select-2010}
P.~B\"{u}hlmann, M.~Kalisch, and M.~H. Maathuis.
\newblock Variable selection in high-dimensional linear models: partially
  faithful distributions and the pc-simple algorithm.
\newblock {\em Biometrika}, 97(2):261--278, 2010.

\bibitem{NeuralNetworkAttributions2019}
A.~Chattopadhyay, P.~Manupriya, and A.~S. andVineeth N.~Balasubramanian.
\newblock Neural network attributions: {A} causal perspective.
\newblock In {\em Proceedings of International Conference on Machine Learning},
  volume~97, pages 981--990. {PMLR}, 2019.

\bibitem{Chickering:2004}
D.~M. Chickering, D.~Heckerman, and C.~Meek.
\newblock Large-sample learning of bayesian networks is np-hard.
\newblock {\em Journal of Machine Learning Research}, 5:1287--1330, 2004.

\bibitem{SupporVectorNetworks1995}
C.~Cortes and V.~Vapnik.
\newblock Support-vector networks.
\newblock {\em Machine learning}, 20(3):273--297, 1995.

\bibitem{Review-InterpretableML2020}
M.~Du, N.~Liu, and X.~Hu.
\newblock Techniques for interpretable machine learning.
\newblock {\em Communications of the ACM}, 63(1):68--77, 2020.

\bibitem{AsymmetricShapleyValues2020}
C.~Frye, C.~Rowat, and I.~Feige.
\newblock Asymmetric shapley values: incorporating causal knowledge into
  model-agnostic explainability.
\newblock {\em Proceedings of International Conference on Neural Information
  Processing (NIPS)}, pages 1229--1239, 2020.

\bibitem{ExplainingExplanations}
L.~H. Gilpin, D.~Bau, B.~Z. Yuan, A.~Bajwa, M.~Specter, and L.~Kagal.
\newblock Explaining explanations: An overview of interpretability of machine
  learning.
\newblock In {\em IEEE International Conference on Data Science and Advanced
  Analytics (DSAA)}, pages 80--89, 2018.

\bibitem{goyal2019explaining}
Y.~Goyal, A.~Feder, U.~Shalit, and B.~Kim.
\newblock Explaining classifiers with causal concept effect ({CaCE}).
\newblock arViv, 2019.

\bibitem{harradon2018causal}
M.~Harradon, J.~Druce, and B.~Ruttenberg.
\newblock Causal learning and explanation of deep neural networks via
  autoencoded activations.
\newblock {\em arXiv preprint arXiv:1802.00541}, 2018.

\bibitem{ImbensRubin2015_Book}
G.~W. Imbens and D.~B. Rubin.
\newblock {\em Causal Inference for Statistics, Social, and Biomedical
  Sciences}.
\newblock Cambridge University Press, 2015.

\bibitem{kalisch2012causal}
M.~Kalisch, M.~M{\"a}chler, D.~Colombo, M.~H. Maathuis, and P.~B{\"u}hlmann.
\newblock Causal inference using graphical models with the {R} package pcalg.
\newblock {\em Journal of Statistical Software}, 47(11):1--26, 2012.

\bibitem{karimi2021algorithmicREcourse}
A.-H. Karimi, B.~Sch{\"o}lkopf, and I.~Valera.
\newblock Algorithmic recourse: from counterfactual explanations to
  interventions.
\newblock In {\em Proceedings of ACM Conference on Fairness, Accountability,
  and Transparency}, pages 353--362, 2021.

\bibitem{kim2018interpretability}
B.~Kim, M.~Wattenberg, J.~Gilmer, C.~Cai, J.~Wexler, F.~Viegas, et~al.
\newblock Interpretability beyond feature attribution: Quantitative testing
  with concept activation vectors (tcav).
\newblock In {\em International Conference on Machine Learning}, pages
  2668--2677. PMLR, 2018.

\bibitem{lundberg2017unified}
S.~M. Lundberg and S.-I. Lee.
\newblock A unified approach to interpreting model predictions.
\newblock In {\em Proceedings of International Conference on Neural Information
  Processing (NIPS)}, volume~30, pages 4765--4774, 2017.

\bibitem{Maathuis-IDA-Statist}
M.~H. Maathuis, M.~Kalisch, and P.~B\"uhlmann.
\newblock Estimating high-dimensional intervention effects from observational
  data.
\newblock {\em Annals of Statistics}, 37(6A):3133--3164, 2009.

\bibitem{MILLER20191}
T.~Miller.
\newblock Explanation in artificial intelligence: Insights from the social
  sciences.
\newblock {\em Artificial Intelligence}, 267:1 -- 38, 2019.

\bibitem{molnar2019}
C.~Molnar.
\newblock {\em Interpretable Machine Learning}.
\newblock 2019.

\bibitem{CausalInterpretation2020}
R.~Moraffah, M.~Karami, R.~Guo, A.~Raglin, and H.~Liu.
\newblock Causal interpretability for machine learning-problems, methods and
  evaluation.
\newblock {\em ACM SIGKDD Explorations Newsletter}, 22(1):18--33, 2020.

\bibitem{MurdochPNASReview2021}
W.~J. Murdoch, C.~Singh, K.~Kumbier, R.~Abbasi-Asl, and B.~Yu.
\newblock Definitions, methods, and applications in interpretable machine
  learning.
\newblock {\em Proceedings of the National Academy of Sciences},
  116(44):22071--22080, 2019.

\bibitem{Olden2004-Variableimportance}
J.~D. Olden, M.~K. Joy, and R.~G. Death.
\newblock An accurate comparison of methods for quantifying variable importance
  in artificial neural networks using simulated data.
\newblock {\em Ecological Modelling}, 178(3-4):389--397, 2004.

\bibitem{generativeCausalExplanation2020}
M.~O'Shaughnessy, G.~Canal, M.~Connor, C.~Rozell, and M.~Davenport.
\newblock Generative causal explanations of black-box classifiers.
\newblock {\em Proceedings of International Conference on Neural Information
  Processing (NIPS)}, 33:5453--5467, 2020.

\bibitem{CausalAttribution2019}
{\'A}.~Parafita and J.~Vitri{\`a}.
\newblock Explaining visual models by causal attribution.
\newblock In {\em IEEE/CVF International Conference on Computer Vision Workshop
  (ICCVW)}, pages 4167--4175. IEEE, 2019.

\bibitem{PearlProbabilisticBook1988}
J.~Pearl.
\newblock {\em Probabilistic reasoning in intelligent systems: networks of
  plausible inference}.
\newblock Morgan kaufmann, 1988.

\bibitem{Pearl2009_Book}
J.~Pearl.
\newblock {\em Causality: Models, Reasoning, and Inference}.
\newblock Cambridge University Press, 2nd edition, 2009.

\bibitem{ribeiro2016should}
M.~T. Ribeiro, S.~Singh, and C.~Guestrin.
\newblock ``{Why should i trust you?}" explaining the predictions of any
  classifier.
\newblock In {\em Proceedings of ACM SIGKDD International Conference on
  Knowledge Discovery and Data Mining (KDD)}, pages 1135--1144, 2016.

\bibitem{Rosenbaum1983_PropensityScore}
R.~P. Rosenbaum and B.~D. Rubin.
\newblock The central role of the propensity score in observational studies for
  causal effects.
\newblock {\em Biometrika}, 70(1):41--55, 1983.

\bibitem{SchwabCXPlain2019}
P.~Schwab and W.~Karlen.
\newblock Cxplain: Causal explanations for model interpretation under
  uncertainty.
\newblock In {\em Proceedings of Conference on Neural Information Processing
  Systems (NIPS)}, pages 10220--10230, 2019.

\bibitem{spirtes2000causation}
P.~Spirtes, C.~N. Glymour, R.~Scheines, and D.~Heckerman.
\newblock {\em Causation, prediction, and search}.
\newblock MIT press, 2000.

\bibitem{Stuart2010_Matching}
E.~A. Stuart.
\newblock Matching methods for causal inference: A review and a look forward.
\newblock {\em Statistical Science}, 25(1):1--21, 2010.

\bibitem{Tsamardinos2006_MMPC}
I.~Tsamardinos, L.~E. Brown, and C.~F. Aliferis.
\newblock The max-min hill-climbing {B}ayesian network structure learning
  algorithm.
\newblock {\em Machine Learning}, 65(1):31--78, 2006.

\bibitem{Shapley-ValuePredictionContribution}
E.~\v{S}trumbelj and I.~Kononenko.
\newblock Explaining prediction models and individual predictions with feature
  contributions.
\newblock {\em Knowledge adn Information System}, 41(3):647–665, 10 2014.

\bibitem{wang-closet}
J.~Wang, J.~Han, and J.~Pei.
\newblock Closet+: searching for the best strategies for mining frequent closed
  itemsets.
\newblock In {\em ACM SIGKDD International Conference on Knowledge Discovery
  and Data Mining}, pages 236--245, 2003.

\bibitem{CLEAR2020}
A.~White and A.~S. d'Avila Garcez.
\newblock Measurable counterfactual local explanations for any classifier.
\newblock In {\em Proceedings of European Conference on Artificial
  Intelligence}, pages 2529--2535. {IOS} Press, 2020.

\bibitem{YuUnifiedFeatureSelection2021}
K.~Yu, L.~Liu, and J.~Li.
\newblock A unified view of causal and non-causal feature selection.
\newblock {\em ACM Transactions on Knowledge Discovery from Data}, 15(4):1--46,
  2021.

\end{thebibliography}

\newpage

\appendix

\section{Proofs of Theorem~1 and Proposition~1}

In this section, we provide proofs for Theorem~1 and Proposition~1. 

Firstly, we introduce some notions used in the proofs. For a DAG $\mathcal{G}$ and  subsets of nodes $\mathbf{X}$ and $\mathbf{Z}$ where $\mathbf{X} \cap \mathbf{Z} = \emptyset$ in $\mathcal{G}$, $\mathcal{G}_{\overline{\mathbf{X}}}$ represents the DAG obtained by deleting from $\mathcal{G}$ all incoming edges to nodes in $\mathbf{X}$, $\mathcal{G}_{\underline{\mathbf{X}}}$ the DAG by deleting from $\mathcal{G}$ all outgoing edges from nodes in $\mathbf{X}$, and $\mathcal{G}_{\overline{\mathbf{X}} \underline{\mathbf{Z}}}$ the DAG by deleting from $\mathcal{G}$ all incoming edges to nodes in $\mathbf{X}$ and outgoing edges from nodes in $\mathbf{Z}$. DAGs $\mathcal{G}_{\overline{\mathbf{X}}}$, $\mathcal{G}_{\underline{\mathbf{X}}}$ and $\mathcal{G}_{\overline{\mathbf{X}} \underline{\mathbf{Z}}}$ are manipulated DAGs and the conditional independences among variables can be read off from the manipulated DAGs.

\begin{theorem}
	\label{theo:main}
	In the problem setting, the explanatory causal effect  can be estimated as follows:  $\ece (X_i, Y \mid \mathbf{x'}) =  P(y \mid X_i=1, \mathbf{p'}) - P(y \mid X_i=0, \mathbf{p'})$ if $ X_i \in \pa(Y)$; and 
	$\ece (X_i, Y \mid \mathbf{x'}) = 0$ otherwise. 
\end{theorem}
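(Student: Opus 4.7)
The proof splits naturally into two cases according to whether $X_i \in \pa(Y)$, and in each case the strategy is to reduce $P(y \mid do(X_i = x_i), do(\mathbf{X'} = \mathbf{x'}))$ to a standard conditional probability using the Markov condition and the rules of $do$-calculus on the underlying causal DAG (recall that $\pa(Y) = \pc(Y)$ can be identified from data because $Y$ has no descendants).

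For the trivial case $X_i \notin \pa(Y)$, the observation is that $\pa(Y) \subseteq \mathbf{X'}$, so the intervention $do(\mathbf{X'} = \mathbf{x'})$ already clamps every parent of $Y$ to the values $\mathbf{p'}$ extracted from $\mathbf{x'}$. In the manipulated graph $\mathcal{G}_{\overline{X_i \cup \mathbf{X'}}}$, the variable $X_i$ has no directed edge into $Y$, and $Y$'s distribution is determined by $\pa(Y)$ alone. Hence $P(y \mid do(X_i = x_i), do(\mathbf{X'} = \mathbf{x'}))$ does not depend on $x_i$, the two terms in $\ece$ coincide, and the effect is zero.

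For the substantive case $X_i \in \pa(Y)$, the plan is to carry out two steps. First, strip the interventions on non-parents: variables in $\mathbf{X'} \backslash \pa'(Y)$ have no edge into $Y$, and in $\mathcal{G}_{\overline{\mathbf{X'}}}$ they are d-separated from $Y$ once $\pa(Y)$ is fixed, so Rule 3 of $do$-calculus removes them, leaving $P(y \mid do(X_i = x_i), do(\pa'(Y) = \mathbf{p'}))$. Second, peel off the remaining $do$ operators one at a time via Rule 2: picking a parent $X_j$ to process, in the graph obtained by cutting the outgoing edges of $X_j$ and the incoming edges of the other still-intervened parents, the direct edge $X_j \to Y$ is gone, so the only remaining paths from $X_j$ to $Y$ are backdoor paths that must enter $Y$ through another parent and are therefore blocked. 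Rule 2 then rewrites $do(X_j = x_j)$ as $X_j = x_j$. Iterating over $X_i$ and all $X_{j_l} \in \pa'(Y)$ delivers $P(y \mid X_i = x_i, \pa'(Y) = \mathbf{p'})$, and subtracting the two values $x_i = 1$ and $x_i = 0$ gives the theorem's formula.

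The main obstacle is the second step: the d-separation hypothesis of Rule 2 must be verified at every peeling stage, and the argument must not tacitly depend on the particular order in which the parents are stripped. Under causal sufficiency, any backdoor path from the currently processed parent $X_j$ to $Y$ is forced to enter $Y$ through some other parent, which at that stage is either still under $do$ (its incoming edges cut in the manipulated DAG) or already in the conditioning set; in both situations the path is blocked, so the required independence holds for any ordering. This order-independence is precisely the content of the G-identifiability proposition included separately in the appendix, and once that is in hand the iteration closes cleanly and the identity of the theorem follows.
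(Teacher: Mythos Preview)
Your proposal is correct and follows essentially the same approach as the paper: reduce the interventions on non-parents via the Markov condition (you invoke Rule~3 explicitly, the paper appeals directly to the Markov condition), then peel off the remaining $do$ operators on $\pa(Y)$ one at a time using Rule~2, checking the required $d$-separation in each manipulated graph $\mathcal{G}_{\overline{\cdots},\underline{X_j}}$. Your explicit remark on order-independence and its link to the G-identifiability proposition is a nice addition that the paper treats separately rather than within the proof of Theorem~\ref{theo:main}.
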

\begin{proof}
	We first recap the problem setting. We assume that Markov condition, causal sufficiency and faithfulness are satisfied, and $Y$ does not have descendants.  Let $\mathcal{G}$ be the DAG representing the relationships among $\mathbf{X}$ and $Y$. The $Y$'s parent is $\pa(Y)$ and $\mathbf{X}$ are non-descendant variable of $Y$. $\mathbf{X' = X}\backslash \{X_i\}$ and $\mathbf{x'}$ is a value of $\mathbf{X'}$. Let $\pa'(Y) = \pa(Y) \backslash \{X_i\}$, and $\mathbf{p'}$ be a value of $\pa'(Y)$ if $X_i \in \pa(Y)$. $y$ stands for $Y=1$ for brevity. 
	
	$\ece (X_i, Y \mid \mathbf{x'}) = P(y \mid do (X_i=1), do (\mathbf{X' = x'})) - P(y \mid do (X_i=0), do (\mathbf{X' = x'}))$. Let $x_i$ represent value 0 or 1. We show how $P(y \mid do (X_i=x_i), do (\mathbf{X' = x'}))$ is reduced to a normal probability expression.  
	
	Firstly, based on the Markov condition, $Y$ is independent of all its non-descendant nodes given its parents, and since $Y$ has no descendants, we have  for any $X_i \notin \pa(Y)$, $X_i \indep Y \mid \pa(Y) = \mathbf{p}$ and hence $P(y \mid do(X_i=x_i), do(\mathbf{X' = x'})) = P(y \mid do(X_i=x_i), do(\pa(Y) = \mathbf{p}))$ Therefore, $\ece (X_i, Y, \mathbf{x}) = 0$ since when the parents of $Y$ are set to certain values, all other variables are independent of $Y$ and their changes do not affect $Y$. 
	
	Secondly, let $X_i \in \pa(Y)$, we will prove $P(y \mid do(X_i=x_i),do(\pa'(Y)=\mathbf{x'}))=P(y \mid X_i = x_i,  \pa'(Y)=\mathbf{x'})$. This can be achieved by repeatedly using Rule 2 in Theorem~3.4.1~\cite{Pearl2009_Book}. Let $\pa(Y) = \{X_{j_1}, X_{j_2}, \ldots, X_{j_k}\}$.  Let $do (x)$ denote $do(X=x)$.
	
	\begin{equation*}
		\begin{aligned}
			&P(y \mid do(x_i),do(x_{j_1}),do(x_{j_2}),\ldots, do(x_{j_k}))  
			=P(y \mid x_i,do(x_{j_1}),do(x_{j_2}),\ldots, do(x_{j_k})) \\
			&since~Y \indep X_i | X_{j_1}, X_{j_2},\dots, X_{j_k}~in~\mathcal{G}_{\overline{X_{j_1}},\overline{X_{j_2}},\ldots,\overline{X_{j_k}},\underline{X_i}} \\
			&=P(y \mid x_i,x_{j_1},do(x_{j_2}),\ldots, do(x_{j_k}))~~~~since~Y \indep X_{j_1} | X_{i}, X_{j_2},\dots, X_{j_k}~in~\mathcal{G}_{\overline{X_{j_2}},\ldots,\overline{X_{j_k}},\underline{X_{j_1}}} \\
			&Repeat~(k-1)~times \\
			&=P(y \mid x_i,x_{j_1},x_{j_2},\ldots, x_{j_k}) =  P(y \mid x_i,\mathbf{p}') = P(y \mid x_i,\mathbf{x}') \\
		\end{aligned}
	\end{equation*}
	$\ece (X_i, Y \mid \mathbf{x'}) = P(y \mid do (X_i=1), do (\mathbf{X' = x'})) - P(y \mid do (X_i=0), do (\mathbf{X' = x'})) = P(y \mid X_i=1, \mathbf{p'}) - P(y \mid X_i=0, \mathbf{p'})$ if $ X_i \in \pa(Y)$.
	
	Hence, the theorem is proved. 
\end{proof}

The following proposition is for the Remark in lines 160-177.

\begin{proposition}[G-identifiability of $P(y \mid do(\pa(Y)))$ without an ordered assignment sequence]
	When there are no latent common causes of any pairs of variables in the system, $P(y \mid do(\pa(Y)))$ is G-identifiable \footnote{G-identifiability is defined in Theorem 4.4.1 (Pearl and Robins 1995) in~\cite{Pearl2009_Book}. We omit it in the Appendix for brevity and we state its condition in the proof. Simply speaking, identifiability means that the causal effect can be estimated from data that is faithful to the underlying causal graph.}  with any ordered sequence of value assignments for variables in $\pa(Y)$.  	
\end{proposition}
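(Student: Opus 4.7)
The plan is to directly verify the d-separation condition from Pearl's G-identifiability theorem (Theorem~4.4.6 in~\cite{Pearl2009_Book}). Fix an arbitrary ordering $\pa(Y)=\{X_1,\ldots,X_n\}$, and for each $k$ let $\mathbf{W}_k$ denote the set of non-descendants of $\{X_k,\ldots,X_n\}$ that are ancestors of either $X_k$ or $Y$ in the mutilated graph $\mathcal{G}_{\underline{X}_k,\overline{X}_{k+1},\ldots,\overline{X}_n}$. The required condition is
\[
(Y \indep X_k \mid X_1,\ldots,X_{k-1},\mathbf{W}_1,\ldots,\mathbf{W}_k)_{\mathcal{G}_{\underline{X}_k,\overline{X}_{k+1},\ldots,\overline{X}_n}}
\]
for every $k$. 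I would verify this in two steps.

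First, I would argue that the auxiliary sets $\mathbf{W}_1,\ldots,\mathbf{W}_k$ can be dropped from the conditioning set without altering the d-separation claim. Since causal sufficiency holds (no latent common causes) and $\pa(Y)$ comprises \emph{all} parents of $Y$, every path from any node in $\mathbf{W}_j$ to $Y$ in the mutilated graph must enter $Y$ through some parent $X_\ell \in \pa(Y)$. Because the incoming edges of $X_{k+1},\ldots,X_n$ have been cut, the entering parent must lie in $\{X_1,\ldots,X_k\}$; and because the outgoing edges of $X_k$ have been cut, that entering parent is actually in $\{X_1,\ldots,X_{k-1},X_k\}$. Any such path is therefore blocked by conditioning on $\{X_1,\ldots,X_{k-1}\}\cup\{X_k\}$ (noting that $X_k$ itself appears on the left of the independence as the ``treatment''). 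So it suffices to prove
\[
(Y \indep X_k \mid X_1,\ldots,X_{k-1})_{\mathcal{G}_{\underline{X}_k,\overline{X}_{k+1},\ldots,\overline{X}_n}}.
\]

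Second, I would enumerate the remaining paths between $X_k$ and $Y$ in the mutilated graph. Any directed path $X_k \to \cdots \to Y$ is removed because all outgoing edges of $X_k$ are cut, so the only candidates are backdoor paths (with an arrow into $X_k$). Such a path must terminate at $Y$ through some parent $X_\ell \in \pa(Y)$, and by the edge cuts it must be $X_\ell \in \{X_1,\ldots,X_{k-1}\}$ (not $X_{k+1},\ldots,X_n$, since their incoming edges are gone, and not $X_k$, since $X_k$ is the endpoint on the other side). Moreover, $X_\ell$ is a non-collider on the path (its edge into $Y$ points out of $X_\ell$ along the path), so conditioning on $\{X_1,\ldots,X_{k-1}\}$ blocks the path. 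Hence the independence holds.

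Since the argument never used any particular feature of the ordering beyond ``earlier'' versus ``later'' indices, the same reasoning applies to any permutation of $\pa(Y)$, giving G-identifiability for every ordered assignment sequence. The main obstacle I anticipate is the bookkeeping in step one: making sure the sets $\mathbf{W}_j$ truly cannot create unblocked paths to $Y$ once the mutilations have been performed; this is where the causal sufficiency assumption (ruling out bidirected edges from latent confounders) is essential, and I would be careful to state that any would-be collider on a path is not opened because $Y$ has no descendants and the conditioning set contains only parents of $Y$ (and ancestors thereof via $\mathbf{W}_j$).
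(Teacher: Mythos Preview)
Your proposal is correct and follows essentially the same route as the paper's proof: both verify Pearl's Theorem~4.4.6 condition by first arguing (via causal sufficiency and the fact that $\pa(Y)$ contains all parents of $Y$) that the auxiliary sets $\mathbf{W}_1,\ldots,\mathbf{W}_k$ may be dropped from the conditioning set, and then checking the reduced $d$-separation $(Y\indep X_k\mid X_1,\ldots,X_{k-1})$ in $\mathcal{G}_{\underline{X}_k,\overline{X}_{k+1},\ldots,\overline{X}_n}$ by noting that the outgoing cut at $X_k$ kills front-door paths while the incoming cuts at $X_{k+1},\ldots,X_n$ force any back-door path to pass through some non-collider $X_\ell\in\{X_1,\ldots,X_{k-1}\}$. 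One small cleanup: in your first step, once the outgoing edges of $X_k$ are removed the edge $X_k\to Y$ is gone, so the ``entering parent'' already lies in $\{X_1,\ldots,X_{k-1}\}$ and the aside about $X_k$ being on the left is unnecessary.
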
	

\begin{proof}
	We use the same notations as in Theorem 4.4.6 in~\cite{Pearl2009_Book} so readers can easily see the G-identifiability condition in Theorem 4.4.6 is satisfied.
	
	Let $\pa(Y) = (X_{1}, X_{2}, \ldots, X_{n})$ be any ordered sequence of parents of $Y$. Let the value assignments be in the same order as that in sequence $\pa(Y)$, such that e.g. $do(X_{k-1})$ is done prior to $do(X_{k})$. Let $\mathbf{W_k}$ be a set of non-descendants of $(X_k, X_{k+1}, \ldots, X_n)$ and have either $Y$ or $X_k$ as descendant in graph $G_{\underline{X}_k, \overline{X}_{k+1}, \dots, \overline{X}_n}$. 
	
	The objective is to show the following independency holds for all $X_k$  in $\pa(Y)$. \\
	$(Y \indep X_k \mid X_1, \ldots, X_{k-1}, \mathbf{W_1, W_2,  \ldots, W_k})_{G_{\underline{X}_k, \overline{X}_{k+1}, \dots, \overline{X}_n}}$. \\
	Then $P(y \mid do(\pa(Y)))$ is G-identifiable (Theorem 4.4.6 in~\cite{Pearl2009_Book}). 
	
	Since $\pa(Y)$ includes all parents of $Y$ and there are no latent common causes of any pairs of variables, all paths from variables in $\mathbf{W_1, W_2,  \ldots, W_k}$ to $Y$ are $d$-separated by $\pa(Y)$ in graph $G_{\underline{X}_k, \overline{X}_{k+1}, \dots, \overline{X}_n}$. Therefore, we only need to show the following independency holds.
	
	$(Y \indep X_k \mid X_1, \ldots, X_{k-1})_{G_{\underline{X}_k, \overline{X}_{k+1}, \dots, \overline{X}_n}}$. 
	
	$\{X_{1}, X_{2}, \ldots, X_{n}\}$ are all parents of $Y$. When the outgoing edges of $X_k$ are removed (i.e. $\underline{X}_k$), the front door links from $X_k$ to $Y$ are removed.  When the incoming edges of $X_{k+1}, \dots, {X_n}$ are removed (i.e. $\overline{X}_{k+1}, \dots, \overline{X}_n$), the backdoor paths of $(X_k, Y)$ cannot go through them and have to go through $X_1, \dots, X_{k-1}$. Note that, $X_1, \dots$, or $X_{k-1}$ cannot be a collider in a path into $Y$ since the edge direction is into $Y$. All the possible backdoor paths of $(X_k, Y)$  are $d$-separated by $X_1, \dots, {X_{k-1}}$ in graph $G_{\underline{X}_k, \overline{X}_{k+1}, \dots, \overline{X}_n}$. Therefore, $(Y \indep X_k \mid X_1, \ldots, X_{k-1})_{G_{\underline{X}_k, \overline{X}_{k+1}, \dots, \overline{X}_n}}$.
	
	Since $\pa(Y) = (X_{1}, X_{2}, \ldots, X_{n})$ is in any order, $P(y \mid do(\pa(Y)))$ is G-identifiable with any ordered sequence of value assignments in $\pa(Y)$. 
\end{proof}

$P(y \mid do(\pa(Y)))$ is given in Theorem~\ref{theo:main} when the causal sufficiency is satisfied, i.e. there are no latent common causes of any pairs of variables in the system.

\section{Implementation}    
We now discuss the implementation of the ECEI Algorithm.

We first discuss the practical problems for finding combined direct causes and interactions and estimating $\eece$ with a large parent set.

\subsubsection{Dealing with combinatorial explosion}
The combined variables (including interactions) can be too many because of the combinatorial explosion. We will need to have the most informative combinations to make the explanations simple and effective.

A simple and sensible way is to set a minimum improvement threshold to prevent combined variables with only slight improvements in $\overline{\eece}$ over their component variables. We use the parameter  $\epsilon$ for this.     

We also need to remove redundant combined variables. If a combined variable has the same frequency as any of its component variables, this combined variable is redundant since its $\overline{\eece}$ is the same as that of its component variable. A closed frequent pattern set~\cite{wang-closet} will avoid generating the redundant combined variables. 

For combined variables, we have considered both values (0 and 1) of each component variable. For example, for a pair of variables $X_i$ and $X_j$, there are four possible combined variables with the values $(1, 1)$,  $(1, 0)$, $(0, 1)$, and $(0, 0)$ of their component variables. Whether using 0 of a component variable or not depends on if values 1 and 0 of the component variable are symmetric. For example, if $X_i$ represents (male, female), and $X_j$ denotes (professional and non-professional), 1 and 0 in both variables are symmetric and four combinations are meaningful. 
If $X_i$ and $X_j$ represent two words: 1 for presence and 0 for absence, then 0 provides much less information than 1 for understanding text and hence 0 and 1 are asymmetric. The three combinations involving 0 are not meaningful and 0 will not be considered in a combined variable. 
In the symmetric case, the four variables of four combined variables of their component variables are dependent. We call them combined variables with the same base variable set. We will use one which gives the largest $\overline{\eece}$ in the modelling to avoid too many derived dependent variables.      

\subsubsection{Dealing with data insufficiency when calculating conditional probabilities}

When calculating $\overline{\eece}$ in Definition14, 
we will need the conditional probability $P(y \mid X_i = x_i, \mathbf{P'=p'})$ where $\mathbf{P'}$ includes all variables in $\epa(Y)$ which are associated with $X_i$ but excludes those containing $X_i$ or a part of $X_i$ (when $X_i$ is an interaction or a combined direct cause). The set $\mathbf{P'}$ can be large and their values partition data into small subgroups, most of which do not contain enough samples to support reliable conditional probability estimation. In causal effect estimation, one common solution is to estimate $P(X_i=1 \mid \mathbf{P'=p'})$, called propensity scores~\cite{Rosenbaum1983_PropensityScore}, and use the propensity scores to partition data into subgroups for conditional probability estimation and bias control~\cite{Stuart2010_Matching}. Propensity score estimation needs building a regression model for each $X_i \in \epa(Y)$ and is time consuming. In the implementation, we use a subset of $\mathbf{P'}$ in which each variable has the highest association with $X_i$ as the conditional set. For estimating $\eece$, we also use a subset of $\mathbf{P'}$ for estimating conditional probabilities.

\subsubsection{Algorithm}
The proposed ECEI algorithm is listed in Algorithm~1. 

$\pc(Y)$ means Parents and Children of the target $Y$. In our problem setting, $Y$ does not have descendants, and hence, $\pc(Y)=\pa(Y)$.  Several algorithms have been developed for discovering $\pc(Y)$, such as PC-Select~\cite{PC-select-2010}, MMPC (Max-Min Parents and Children)~\cite{Tsamardinos2006_MMPC} and  HITON-PC~\cite{Aliferis2003_Hiton}.  These algorithms use the framework of constraint-based Bayesian network learning and employ conditional independence tests for finding the PC set of a variable. Their performance is very similar. 
We choose PC-Select in the R package \emph{PCalg}~\cite{kalisch2012causal}. This is shown in Line 1 of Algorithm 1. 

There are many packages available for frequent pattern mining. We use Apriori() in the R package \emph{arules} (https://github.com/mhahsler/arules). 

Other parts of the algorithm are self-explanatory or have been explained in the previous two subsections. 

The main costs of the algorithm are from PC$\_$Select and Apriori. Finding $\pa(Y)$ takes $O(|\mathbf{X}|^l)$ where $l$ is the size of the maximal conditional set for conditional independence tests, and usually $l$=3-6. The complexity of Apriori is around $O(|\mathbf{X'}|^k)$ where $\mathbf{X'}$ is the set of frequent items (variable values) and its size is at most $2|\mathbf{X}|$. $l$ is the maximum size of a combined variable, typically a small number 2-4 for good interpretation. So, the overall complexity is $O(|\mathbf{X}|^l)$. The algorithm may not be able to efficiently handle a large number of variables such as hundreds of variables and this is a limitation.        

\section{Experiment details}

\subsection{Data sets}

We conduct experiments on five data sets. The results from the  data sets are self-explanatory and can be evaluated by common sense. 


We have processed the Adult data set to make the explanation easy. The Adult data set (Table~\ref{tab_Adult_data}) was retrieved from the UCI Machine Learning Repository~\cite{Bache+Lichman:2013} and it is an extraction of the 1994 USA census database. It is a well known classification data set to predict whether a person earns over 50K or not in a year. We recoded the data set to make the causes for high/low income easily understandable. 

The two other data sets are downloaded from the same repository~\cite{Bache+Lichman:2013}  and binarised. Each value in a categorical attribute is converted to a binary variable where 1 indicates the presence of the value. A numerical attribute is binarised by the median. In the combined variables, the variables from the same attributes will not be considered as candidates.

\begin{table}
	\small
	\center
	\caption{Summary of the Adult data set}
	\label{tab_Adult_data}
	\begin{tabular}{|l|cc|c|}
		\hline
		Attributes & Yes & No & Comment \\
		\hline
		age $<$ 30 & 14515 & 34327 & young \\
		age $>$ 60 & 3606 & 45236 & old \\
		private & 33906 & 14936 & private company employer \\
		self-emp & 5557 & 43285 & self employment \\
		married & 22397 & 26463 & married incl. with spouse \\
		gov & 6549 & 42293 &  government employer \\
		education-num$>$12 & 12110 & 36732 & Bachelor or higher\\
		education-num$<$9 & 6408 & 42434 & education years\\
		Prof & 23874 & 24968 & professional occupation \\
		white & 41762 & 7080 & race \\
		male & 32650 & 16192 & \\
		hours $>$ 50 & 5435 & 43407 & weekly working hours \\
		hours $<$ 30 & 6151 & 42691 & weekly working hours \\
		US & 43832 & 5010 & nationality\\
		$>$50K & 11687 & 37155 & annual income, outcome\\
		\hline
	\end{tabular}
\end{table}

The Olivetti face dataset from AT\&T Laboratories Cambridge \footnote{https://scikit-learn.org/0.19/datasets/olivetti\_faces.html} is used for the evaluation with image data. Images are flattened to 1D vectors, and a random forest is used for classification. We also use LIME to convert an image for which we need to explain its predicted label, which is then input to ECEI. 

The 20 newsgroups dataset \footnote{https://scikit-learn.org/0.19/datasets/twenty\_newsgroups.html} is used to conduct the experiment with text data. Raw text data are first vectorised into TF-IDF features. Then, a random forest model is trained for classification using the TF-IDF dataset. We use LIME to convert a text for which we need to explain its predicted label to the interpretable representation. The representation is fed into ECEI to generate explanations. 

\end{document}